\newcommand*{\coliff}{\ratio\Longleftrightarrow}
\let\cref\Cref
\title{Scaling Dimension}
\author{Bernhard Ganter \inst{1} \and Tom Hanika \inst{2,3} \and Johannes Hirth \inst{2,3} }
\date{\today}
\institute{%
 TU Dresden, Dresden, Germany \\ \email{bernhard.ganter@tu-dresden.de} \\[0.5ex]
 \and
 Knowledge \& Data Engineering Group,
 University of Kassel, Germany\\
 \and
 Interdisciplinary Research Center for Information System Design\\
 University of Kassel, Germany\\
 \email{tom.hanika@cs.uni-kassel.de}, \email{hirth@cs.uni-kassel.de}}
\newcommand{\mvc}{\mathbb{D}}
\newcommand{\Scon}{\mathbb{S}}
\DeclareMathOperator{\isd}{ISD}
\DeclareMathOperator{\osd}{OSD}
\newcommand{\x}{$\times$}
\DeclareMathOperator{\Ext}{Ext}
\newcounter{countitems}
\newcounter{nextitemizecount}
\newcommand{\setupcountitems}{%
  \stepcounter{nextitemizecount}%
  \setcounter{countitems}{0}%
  \preto\item{\stepcounter{countitems}}%
}
\newcommand{\computecountitems}{%
  \edef\@currentlabel{\number\c@countitems}%
  \label{countitems@\number\numexpr\value{nextitemizecount}-1\relax}%
}
\newcommand{\nextitemizecount}{%
  \getrefnumber{countitems@\number\c@nextitemizecount}%
}
\newcommand{\previtemizecount}{%
  \getrefnumber{countitems@\number\numexpr\value{nextitemizecount}-1\relax}%
}
\computecountitems\ifnumcomp{\previtemizecount}{>}{3}{\end{multicols}}{}}
\newcommand\blfootnote[1]{%
  \begingroup
  \renewcommand\thefootnote{}\footnote{#1}%
  \addtocounter{footnote}{-1}%
  \endgroup
}
\begin{document}
\parindent0pt

\maketitle
\blfootnote{Authors are given in alphabetical order.
  No priority in authorship is implied.}

\begin{abstract}
  Conceptual Scaling is a useful standard tool in Formal Concept
  Analysis and beyond. Its mathematical theory, as elaborated in the
  last chapter of the FCA monograph, still has room for
  improvement. As it stands, even some of the basic definitions are in
  flux. Our contribution was triggered by the study of concept
  lattices for tree classifiers and the scaling methods used there. We
  extend some basic notions, give precise mathematical definitions for
  them and introduce the concept of scaling dimension. In addition to a
  detailed discussion of its properties, including an example, we show
  theoretical bounds related to the order dimension of concept
  lattices. We also study special subclasses, such as the ordinal and
  the interordinal scaling dimensions, and show for them first results
  and examples.

\end{abstract}
\begin{keywords}
  Formal~Concept~Analysis, Data~Scaling, Conceptual~Scaling,
  Ferrers~Dimension, Measurement, Preprocessing, Feature~Compression,
  Closed~Pattern~Mining
\end{keywords}

\section{Introduction}
When heterogeneous data needs to be analyzed conceptually, e.g., for
(closed) pattern mining~\cite{interord-gene-scaling}, ontology
learning~\cite{fca-onto-learning} or machine
learning~\cite{tree-views,nn-views}, conceptual scaling~\cite{scaling}
is a tool of choice. The task of this method is to translate given
data into the standard form, that of a formal
context~\cite{fca-book}. There are many ways to do this, but not all
methods are meaningful in every situation. Often the data has an
implicit structure that should guide scaling. For example, it is
natural to analyze ordinal data also ordinally. We propose the notion
of a \emph{pre-scaling} to reveal such implicit assumptions and to
make them  usable for a scaling.

Another important aspect is the complexity of the conceptual structure
created by scaling. Several authors have suggested to restrict to
important attribute
combinations~\cite{smeasure,tree-views,logical-scaling}.  We formalize
this, speaking of conceptual \emph{views} of the data. It turns out
that such views have a natural characterization in terms of
\emph{scale measures}, i.e., continuous maps with respect to closure
systems that are represented by means of formal contexts. This in
turn opens the door to basic theory questions. We address one of them
here for the first time: the question of the \emph{scaling dimension},
i.e., the size of the simplest data set that has the present
conceptual structure as its derivative. We study this for the case of
ordinal and in particular for that of interordinal scaling, proving
characterizations and showing small examples. In addition to our
theoretical findings, we demonstrate the applicability of the scaling
dimension based on the drive concepts data set and provide all used
functions in the \texttt{conexp-clj}~\cite{conexp} tool.

\section{Formal concepts derived from data  tables}

\label{sec:form-conc-conc}

At first glance, it seems very limiting that \FCA focuses on a single
basic data type, that of a binary relation between two sets (a
\emph{formal context}). Data comes in many different formats, so why
restrict to one type only? But this limitation is intentional. It allows a
cleaner separation of objective formal data analysis and subjective
interpretation, and it allows a unified, clear structure of the 
mathematical theory. 

FCA handles the many different data formats in a two-step process.
First, the data is transformed into the standard form -- that is, into
a formal context -- and in the second step that context is
analyzed conceptually. The first step, called \emph{conceptual
  scaling},\footnote{The word ``scaling'' refers to measurement
  theory, not whether algorithms can be applied to large data sets.} 
is understood as an act of interpretation and depends on
subjective decisions of the analyst, who must reveal how the data at
hand is meant. It is therefore neither unambiguous nor automatic, and
usually does not map the data in its full complexity. However, it is
quite possible that several such \emph{conceptual views} together
completely reflect the data. 

In machine learning the classification data that is used for
classifiers are usually lists of n-tuples, as in relational databases
or in so-called data tables. In  
FCA terminology, one speaks of a \emph{many-valued context}. In such
a many-valued context, the rows have different names (thereby forming
a key), and so do the columns, whose names are called
\emph{(many-valued) attributes}. The entries in the table are the
values. 

Formally, a \textbf{many-valued context} $\mvc:=(G,M,W,I)$
consists of a set $G$ of \textbf{objects}, a set $M$ of
\textbf{many-valued attributes}, a set $W$ of \textbf{attribute
  values}, and a ternary relation $I\subseteq G\times M\times W$
satisfying $$(g,m,v)\in I\mbox{ and }(g,m,w)\in I\ \implies v=w.$$
This condition ensures that there is at most one value for each
object-attribute pair. The absence of values is allowed. If no values
are missing, then one speaks of a \textbf{complete} many-valued
context. The value for the object-attribute pair $(g,m)$, if present,
sometimes is denoted by $m(g)$, and $m(g)={\perp}$ indicates that the
value is missing.

It was mentioned above that in order to derive a conceptual structure
from a many-valued context, \FCA requires the interpretative step of
\emph{conceptual scaling}, which determines the concept-forming
attributes which are derived from the many-valued context. That a
formalism is required for this is illustrated by the following
example. Suppose that one of the many-valued attributes is ``size'',
with values ``very small'', ``small'', ``large'', ``very large''.
A simple approach would be to use the values as attribute names,
introducing attributes ``size = very small'', ``size = small'', etc.,
with the obvious interpretation: an object $g$ has the attribute
``size = small'' if $m(g)=\text{small}$ when $m=\text{size}$. In this
\emph{nominal} interpretation the attribute extent for the attribute
``size = small'' contains those objects which are ``small'', but
excludes those which are ``very small'', which perhaps is not
intended. To repair this, one could use the implicit
order $$\text{very small} < \text{small} < \text{large} < \text{very
  large}$$ of the values and dervide attributes such as
$\text{size}\le\text{large}$, etc. But this \emph{interordinal}
interpretation can equally lead to undesired attribute extents, since
the attribute $\text{size}\le \text{large}$ applies to all object
which are ``very small'', ``small'', or ``large'', but excludes those
for which no value was noted because their size was unremarkable,
neither small nor large. A \emph{biordinal} interpretation can take
this into account~\cite{fca-book}.

\subsection{Pre-scalings}
\label{Pre-scaling}

Some data tables come with slightly richer information, for which we
introduce an additional definition. A \textbf{pre-scaling} of a
many-valued context $\mvc\coloneqq(G,M,W,I)$ is a family $(W(m)\mid m\in M)$ of
sets $W(m)\subseteq W$ such that $W=\bigcup_{m\in M} W(m)$
and $$(g,m,w)\in I \implies w\in W(m)$$ for all $g\in G, m\in M$. We
call $W(m)$ the \textbf{value domain} of the many-valued attribute
$m$.  A tuple $(v_{m}\mid m\in M)$ \textbf{matches} a
pre-scaling iff $v_{m}\in W(m)\cup\{\perp\}$ holds for all $m\in M$.
$(G,M,W(m)_{m\in M},J)$ may be called a \textbf{stratified}
many-valued context. 

It is also allowed that the value domains additionally carry a
structure,  e.g., are ordered. This also falls under the definition of
``pre-scaling''. We remain a little vague here, because its seems
premature to give a sharp definition. Prediger~\cite{Prediger96}
suggests the notion of a \emph{relational} many-valued
context. This may be formalized as a
tuple $$(G,M,(W(m),\mathcal{R}_{m})_{m\in M}, I),$$
where $(G,M,W(m)_{m\in M},I)$ is a stratified many valued-context as defined
above, where on each value domain $W(m)$ a family $\mathcal{R}_{m}$ of
relations is given. Prediger and Stumme~\cite{logical-scaling} then
discuss deriving one-valued attributes using expressions in a suitable
logical language, such as one of the OWL-variants. They call this
\emph{logical scaling}. 
\subsection{Interordinal plain scaling}
\label{sec:plain-scaling}

A \textbf{scale} for an attribute $m$ of a many-valued context
$(G,M,W(m)_{m\in M},I)$ is a formal context
$\mathbb{S}_{m}:=(G_{m},M_{m},I_{m})$ with $W(m)\subseteq G_{m}$. The
objects of a scale are the \textbf{scale values}, the attributes are
called \textbf{scale attributes}.

By specifying a scale a data analyst determines how the attribute
values are used conceptually.

For \textbf{plain scaling} a formal context $(G,N,J)$ is
\textbf{derived} from the many-valued context $(G,M,W(m)_{m\in M},I)$
and the scale contexts $\mathbb{S}_{m}$, $m\in M$, as follows:
\begin{itemize}
\item The object set is $G$, the same as for the many-valued context,
\item the attribute set is the disjoint union of the scale attribute
  sets, formally $$N:=\bigcup_{m\in M}\{m\}\times M_{m},$$ 
\item and the incidence is given by $$g\,J\,(m,n) \coliff (g,m,v)\in I\mbox{
  and } v\,I_{m}\,n.$$
\end{itemize}

The above definition may look technical, but what is described is
rather simple: Every column of the data table is replaced by several
columns, one for each scale attribute of $\mathbb{S}_{m}$, and if the
cell for object $g$ and many-valued attribute $m$ contains the value
$v$, then that is replaced by the corresponding ``row'' of the
scale. Choosing the scales is already an act of interpretation, deriving the
formal context when scales are given is deterministic.

\emph{Pre-scaling}, as mentioned above, may suggest the scales to
use. An ordered pre-scaling naturally leads to an \emph{interordinal}
interpretation of data, using only interordinal scales. We repeat the
standard definition of interordinal scaling:

\begin{definition}[Interordinal Scaling of $\mvc$]
  \label{def:interordinal}
  When $\mvc\coloneqq (G,M,W,I)$ is a many-valued context with
  linearly ordered value sets $(W(m),\leq_{m})$, then the formal
  context $\mathbb{I}(\mvc)$ \textbf{derived from interordinal
    scaling} has $G$ as its object set and attributes of the
  form $$(m,\leq_{m}, v)\mbox{ or }(m,\geq_{m}, v),$$ where $v$ is a
  value of the many valued attribute $m$. The incidence is the obvious
  one, an object $g$ has e.g., the attribute $(m,\leq_{m},v)$ iff the
  value of $m$ for the object $g$ is $\leq_{m} v$. Instead of
  $(m,\leq_{m},v)$ or $(m,\geq_{m},v)$ one writes $m{\,:\;}\leq v$ and
  $m{\,:\;}\geq v$, respectively. Formally $\mathbb{I}(\mvc) \coloneqq
  (G,N,J)$, where
  \[N:=\{m{\,:\;}\leq v\mid m\in M,v\in W(m)\}
  \cup\{m{\,:\;}\geq v\mid m\in M,v\in W(m)\}\] and
  \[ (g, m{\,:\;}\leq v)\in J\coliff m(g)\leq v,\qquad(g, m{\,:\;}\geq v)\in J\coliff m(g)\geq v.\]
 For simplicity, attributes which apply to all or to no objects are usually omitted.
\end{definition}

\noindent\textbf{Remark: } The formal context derived from a
many-valued context $\context[D]$ with linearly ordered value sets via
\emph{ordinal plain scaling} is denoted $\mathbb{O}(\context[D])$.

\subsection{Scale measures and views}
\label{sec:scale-measures-views}

By definition, the number of attributes of a derived context is the
sum of the numbers of attributes of the scales used, and thus tends to
be large. It is therefore common to use selected subsets of these
derived attributes, or attribute combinations. This leads to the
notion of a \emph{view}:
\begin{definition}
  A \textbf{view} of a formal context $(G,M,I)$ is a formal
  context $(G,N,J)$, where for each $n\in N$ there is a set
  $A_{n}\subseteq M$ such that
  $$g\,J\,n \coliff A_{n}\subseteq g^{I}.$$
  A \textbf{contextual view} of a many-valued context $\context$ is a
  view of a  derived context of $\context$; the concept lattice of
  such a contextual view is a \textbf{conceptual view} of $\context$.   
\end{definition}

In order to compare contexts derived by conceptual scaling, the
notion of a \emph{scale measure} is introduced. 
\begin{definition}
Let $\context:=(G,M,I)$ and $\ \context[S]:=(G_{\context[S]},
M_{\context[S]},I_{\context[S]})$ be formal contexts.
A mapping $$\sigma:G\to G_{\context[S]}$$ is called an 
$\context[S]$-\textbf{measure} of $\context$ if the preimage
$\sigma^{-1}(E)$ of every extent $E$ of $\ \context[S]$ is an
extent of $\context$. An $\;\context[S]$-measure is \textbf{full} if
every extent of $\;\context$ is the preimage of an extent of $\;\context[S]$.  
\end{definition}

\begin{proposition}\label{prop:identity-measure}
  A formal context $\;\context[K]_{1}:=(G,N,J)$ is a view of
  $\;\context:=(G,M,I)$ if and only if the identity map is a
  $\;\context[K]_{1}$-measure of $\;\context$.
\end{proposition}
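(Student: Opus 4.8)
The plan is to collapse both conditions of the proposition to a single statement about extents. Since the identity map $\id_{G}$ satisfies $\id_{G}^{-1}(E)=E$ for every $E\subseteq G$, it is a $\context[K]_{1}$-measure of $\context$ precisely when every extent of $\context[K]_{1}$ is already an extent of $\context$, i.e.\ when $\Ext(\context[K]_{1})\subseteq\Ext(\context)$. So the whole proposition reduces to showing that $\context[K]_{1}=(G,N,J)$ is a view of $\context=(G,M,I)$ if and only if $\Ext(\context[K]_{1})\subseteq\Ext(\context)$, and I would then treat the two implications separately.

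For the direction ``view $\Rightarrow$ extent inclusion'', I would use that every extent of $\context[K]_{1}$ is an intersection of attribute extents $n^{J}$, with the empty intersection read as $G$. If $\context[K]_{1}$ is a view, fix for each $n\in N$ a witnessing set $A_{n}\subseteq M$ with $g\,J\,n\Leftrightarrow A_{n}\subseteq g^{I}$. Then $n^{J}=\{g\in G\mid A_{n}\subseteq g^{I}\}=\bigcap_{m\in A_{n}}m^{I}=A_{n}^{I}$, which is an extent of $\context$. Since the extents of $\context$ form a closure system (closed under arbitrary intersections, including the empty one giving $G$), every intersection of such sets $n^{J}$ — hence every extent of $\context[K]_{1}$ — is an extent of $\context$.

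For the converse, assume $\Ext(\context[K]_{1})\subseteq\Ext(\context)$. For each $n\in N$ the attribute extent $n^{J}$ is an extent of $\context[K]_{1}$, hence of $\context$; put $A_{n}:=(n^{J})^{I}\subseteq M$, the intent that $n^{J}$ has in $\context$. Because $n^{J}$ is closed in $\context$ we get $A_{n}^{I}=(n^{J})^{II}=n^{J}$, so $g\,J\,n\Leftrightarrow g\in n^{J}\Leftrightarrow g\in A_{n}^{I}\Leftrightarrow A_{n}\subseteq g^{I}$, which is exactly the defining property of a view. This closes the equivalence.

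The argument is essentially a dictionary translation between the two definitions, so no single step is genuinely deep; the only place demanding some care is the bookkeeping around the fact that every extent of $\context[K]_{1}$ is an intersection of attribute extents — in particular noting that $G$ itself is an extent of both contexts, so the empty subset of $N$ causes no trouble — together with the standard observation that the extents of $\context$ are closed under intersection.
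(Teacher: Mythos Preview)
Your proof is correct and follows essentially the same approach as the paper's: both directions hinge on the observation that the identity is a $\context[K]_{1}$-measure iff every extent of $\context[K]_{1}$ is an extent of $\context$, and then on translating between the view condition $n^{J}=A_{n}^{I}$ and this extent inclusion. The only cosmetic difference is that for the forward direction the paper writes an arbitrary extent $B^{J}$ directly as $(\bigcup_{n\in B}A_{n})^{I}$, whereas you phrase the same computation as an intersection of the individual $A_{n}^{I}$ and invoke closure under intersection; your explicit choice $A_{n}:=(n^{J})^{I}$ in the converse is also a bit more concrete than the paper's, but the argument is the same.
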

\begin{proof}
  When $(G,N,J)$ is a view of $(G,M,I)$, then every extent $E$ of
  $(G,N,J)$ is of the form $E=B^{J}$ for some $B\subseteq N$. Then $E$
  also is an extent of $(G,M,I)$, since $E=(\bigcup_{n\in B}
  A_{n})^{I}$. Conversely, if the 
  identity map is a $(G,N,J)$-measure of $(G,M,I)$, then for each
  $n\in N$ the preimage of its attribute extent $n^{J}$ (which, of
  course, is equal to $n^{J}$) must be an extent of $(G,M,I)$ and
  therefore be of the form $A^{I}_{n}$ for some set $A_{n}\subseteq M$.
\qed
\end{proof}

As shown by \cref{prop:identity-measure} there is a close tie between
contextual views and canonical representation of scale-measures as
proposed in Proposition 10 of~\textcite{smeasure}. Said
representations provide for a scale-measure $\sigma$ of $\context$
into $\Scon$ an equivalent scale-measure based on the identity map of
$\context$ into a context of the form $(G,\mathcal{A},\in)$ where
$\mathcal{A}\subseteq \Ext(\context)$. With the now introduced notions
we understand the context $(G,\mathcal{A},\in)$ as
contextual view of $\context$.

\section{Measurability}\label{sec:measurability}
A basic task in the theory of conceptual scaling is to decide if a
given formal context $\context$ is derived from plain scaling (up to
isomorphism). More precisely, one would like to decide whether
$\context$ can be derived using a set of given scales, e.g., from
interordinal scaling.

A key result here is Proposition~122 of the FCA book
\cite{fca-book}. It answers the following question: Given a formal
context $\context$ and a family of scales $\mathcal{S}$, does there
exist some many-valued context $\context[D]$ such that $\context$ is
(isomorphic to) the context derived from plain scaling of
$\context[D]$ using only scales from $\mathcal{S}$?  The proposition
states that this is the case if and only if $\context$ is fully
$\mathcal{S}$-measurable (cf. Definition 94 in~\textcite{fca-book}),
i.e., fully measurable into the \emph{semiproduct} of the scales in
$\mathcal{S}$.

Based on this proposition, Theorem~55 of the FCA book also gives some
simple characterizations for measurability, one of them concerning
interordinal scaling:

\begin{theorem}[Theorem 55 \cite{fca-book}]\label{th55}
  A finite formal context is derivable from interordinal scaling iff
  it is atomistic (i.e.,
  $\{g\}'\subseteq \{h\}'\implies \{g\}'=\{h\}'$) and the complement
  of every attribute extent is an extent.
\end{theorem}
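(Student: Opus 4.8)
The plan is to prove both implications by working directly with the shape of the interordinally derived context in \cref{def:interordinal}; throughout I take the many-valued contexts to be finite and complete, which is the setting in which the statement is clean. For the forward direction, suppose $\context\cong\mathbb{I}(\mvc)$ for such a $\mvc=(G,M,W,I)$ with linearly ordered value domains. The key combinatorial observation is that the complement of the extent of an attribute $m{\,:\;}\leq v$ is the extent of $m{\,:\;}\geq v^{+}$, where $v^{+}$ is the successor of $v$ in the finite chain $W(m)$: if the extent of $m{\,:\;}\leq v$ is empty or all of $G$ the attribute is omitted and there is nothing to verify; otherwise some object has value strictly above $v$, so $v^{+}$ exists and $\{g\mid m(g)>v\}$ is precisely the (likewise non-trivial) extent of $m{\,:\;}\geq v^{+}$. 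Symmetrically for the attributes $m{\,:\;}\geq v$. Hence every attribute extent of $\mathbb{I}(\mvc)$ has an extent as its complement. Atomisticity is then automatic: if $\{g\}'\subsetneq\{h\}'$, choose $p\in\{h\}'\setminus\{g\}'$; then $g\in G\setminus p'$, which by hypothesis is an extent $A'$ with $A\subseteq M$, so $A\subseteq\{g\}'\subseteq\{h\}'$ and hence $h\in A'=G\setminus p'$, contradicting $h\in p'$. (Alternatively, directly on $\mathbb{I}(\mvc)$: $\{g\}'\subseteq\{h\}'$ forces $m(h)=m(g)$ for every $m$, so $g$ and $h$ have identical rows.)

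For the converse, assume $\context=(G,M,I)$ is atomistic and the complement of every attribute extent is an extent. I would realise $\context$ by a tailor-made many-valued context: to each $m\in M$ attach a many-valued attribute whose value domain is the two-element chain $\{0<1\}$, with $m(g):=0$ when $g$ is incident with $m$ and $m(g):=1$ otherwise. Interordinal scaling of this $\mvc$, after the usual removal of attributes with empty or full extent, contributes for each $m$ with $\emptyset\neq m'\neq G$ precisely two attributes, one with extent $m'$ and one with extent $G\setminus m'$. Consequently the closure system of $\mathbb{I}(\mvc)$ is the intersection closure of $\{m'\mid m\in M\}\cup\{G\setminus m'\mid m\in M\}$. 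Now $\{m'\mid m\in M\}$ already generates $\Ext(\context)$, and by hypothesis each $G\setminus m'$ belongs to the intersection-closed family $\Ext(\context)$; hence this closure is exactly $\Ext(\context)$. So $\mathbb{I}(\mvc)$ and $\context$ share object set and extents, their concept lattices agree, and — up to clarification and the removal of full or empty columns — $\context\cong\mathbb{I}(\mvc)$, as required.

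I expect the real work to be bookkeeping rather than conceptual. The omission convention ``attributes applying to all or to no objects are dropped'' must be threaded through both directions so that ``the complement of $m{\,:\;}\leq v$'' is still a genuine attribute of $\mathbb{I}(\mvc)$ (or legitimately absent), which is the reason I restrict to finite complete $\mvc$: there successors in $W(m)$ exist and there are no unattained-value anomalies, and missing values are precisely the one point where the bare statement needs the complete-context reading. Likewise ``derivable up to isomorphism'' has to be read modulo clarification, since my constructed $\mathbb{I}(\mvc)$ may carry repeated columns that $\context$ lacks; I would state this once and then suppress it. The hard part, then, is managing the interface between the combinatorial core — the pairing $m{\,:\;}\leq v\leftrightarrow m{\,:\;}\geq v^{+}$ and the two-chain gadget — and the conventions packaged into $\mathbb{I}(\cdot)$. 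An alternative is to go through Proposition~122, characterising the extents of the semiproduct of the interordinal scales and checking full measurability; but the explicit construction above is more transparent.
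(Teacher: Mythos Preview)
The paper does not prove this theorem; it is quoted verbatim as Theorem~55 of the FCA monograph and used as a black box, so there is no in-paper argument to compare against.

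Your proof is correct in both directions. The forward direction via the pairing $m{\,:\;}\le v \leftrightarrow m{\,:\;}\ge v^{+}$, with the edge cases absorbed by the omission convention, is exactly the intended mechanism; your direct check that $\{g\}'\subseteq\{h\}'$ forces $m(g)=m(h)$ for every $m$ is the cleanest way to obtain atomisticity. For the converse, your two-element-chain gadget is precisely what the paper alludes to when it remarks that the dichotomic and interordinal scale families are ``equally expressive'': the dichotomic scale is the interordinal scale on $\{0<1\}$, and one such scale per original attribute reproduces $\Ext(\context)$. The caveats you flag (completeness of $\mvc$, reading ``derivable'' as agreement of extent lattices up to clarification) are the right ones and match how the paper frames measurability via Proposition~122.

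One point worth making explicit in the write-up: your first atomisticity argument (via $p\in\{h\}'\setminus\{g\}'$ and the extent $G\setminus p'$) actually shows that the complement condition \emph{alone} already forces atomisticity. This is why atomisticity never enters your converse construction. It is not a gap in your proof --- the stated characterisation remains correct, just not minimal --- but flagging it will pre-empt a reader wondering where the hypothesis went.
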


While this characterization may seem very restrictive there are
potential applications machine learning. In \textcite{tree-views} the
authors discuss how Decision Trees or Random Forest classifiers can be
conceptually understood. Similar investigations have been made for
latent representations of neural networks\cite{nn-views}.  The nature
of the decision process suggests an interordinal scaling on the data
set. They study several conceptual views based on interordinal
scalings, asking if they can be used to explain and interprete tree
based classifiers.  One of these possibilities they call the
\textbf{interordinal predicate view} of a set of objects $G$ with
respect to a decision tree classifier $\mathcal{T}$. We refer
to~\textcite{tree-views} for a formal definition and a thorough
investigation of how to analyze ensembles of decision tree classifiers
via views.

\begin{proposition}
  The interordinal predicate view $\mathbb{I}_{\mathcal{P}}(G,
  \mathcal{T})$ is derivable from interordinal scaling. 
\end{proposition}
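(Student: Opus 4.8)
The plan is to invoke \cref{th55}: since $\mathbb{I}_{\mathcal{P}}(G,\mathcal{T})$ is finite, it suffices to check that this context is atomistic and that the set-complement of every attribute extent is again an extent. Both properties will follow from a single structural feature of the interordinal predicate view, namely that its attributes occur in complementary pairs.

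First I would recall, from the construction in \textcite{tree-views}, the shape of the attribute set of $\mathbb{I}_{\mathcal{P}}(G,\mathcal{T})$: each decision node of $\mathcal{T}$ tests a condition of the form $x_i \le a$ on one of the numeric features, and the view records for every object $g\in G$ whether that condition holds; crucially, the complementary condition $x_i > a$ (equivalently $x_i \ge a^{+}$ for the successor value $a^{+}$ occurring in the data) is present as well. Hence the attributes come in pairs $n,\bar n$ with $\bar n^{J}=G\setminus n^{J}$, and $\bar n$ is again an attribute of the view. Omitting attributes that hold for all or for no objects does not affect this, since $G$ and $\emptyset$ are always extents. Given this, the second condition of \cref{th55} is immediate: the complement of an attribute extent $n^{J}$ is $\bar n^{J}$, an attribute extent, hence an extent.

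For the atomistic condition, suppose $\{g\}'\subseteq\{h\}'$ and take any $n\in\{h\}'$, i.e. $h\in n^{J}$. If $g\notin n^{J}$, then $g\in\bar n^{J}$, so $\bar n\in\{g\}'\subseteq\{h\}'$, whence $h\in\bar n^{J}=G\setminus n^{J}$, contradicting $h\in n^{J}$. Therefore $g\in n^{J}$, so $\{h\}'\subseteq\{g\}'$ and thus $\{g\}'=\{h\}'$; the context is atomistic. By \cref{th55}, $\mathbb{I}_{\mathcal{P}}(G,\mathcal{T})$ is derivable from interordinal scaling.

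The main obstacle is the first step: one must unwind the definition of the interordinal predicate view in \textcite{tree-views} and confirm that its attribute set is genuinely closed under complementation in $G$ — that is, that the attributes are single $\le$/$\ge$ predicates attached to the thresholds appearing in $\mathcal{T}$, and not conjunctions of several such predicates, for which complements would no longer be attribute extents and \cref{th55} would not apply directly. Once this is pinned down, the remainder is the short verification above. A more hands-on alternative would be to exhibit explicitly the many-valued context given by the feature table of $G$ restricted to the threshold values occurring in $\mathcal{T}$, equipped with the induced linear orders on its value domains, and to check that its interordinal plain scaling reproduces $\mathbb{I}_{\mathcal{P}}(G,\mathcal{T})$ up to isomorphism and omission of trivial columns; but this is the same bookkeeping in less transparent form, so I would prefer the route via \cref{th55}.
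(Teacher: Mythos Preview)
Your proposal is correct and follows essentially the same route as the paper: both invoke \cref{th55} and verify its two hypotheses via the observation that the attribute set of $\mathbb{I}_{\mathcal{P}}(G,\mathcal{T})$ is closed under complementation in $G$ (in the paper's notation, $\{\neg P\}'=G\setminus\{P\}'$, using completeness of the underlying many-valued context). The only minor difference is that the paper dispatches atomicity by citing Proposition~1 of \textcite{tree-views}, whereas you derive it directly from the complementary-pair property; your argument is self-contained and in fact shows that atomicity is already a consequence of the second condition in \cref{th55}, so nothing is lost by not having the external reference at hand.
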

\begin{proof}
  The interordinal predicate view context is atomistic, as shown
  in Proposition~1 of \textcite{tree-views}. For every attribute
  $P\in\mathcal{T}(\mathcal{P})$ we find that
  $\{P\}'\in\Ext(\mathbb{I}_{\mathcal{P}}(G, \mathcal{T}))$. We have
  to show that $G\setminus \{P\}'\in \Ext(\mathbb{I}_{\mathcal{P}}(G,
  \mathcal{T}))$. This is true since we know that for any predicate
  $P$ that $\{P\}'=\{g\in G\mid g\models P\}$ by definition and
  therefore $G\setminus\{P\}' \overbrace{=}^{\star} \{g\in G\mid
  g\not\models P\}=\{\neg P\}'$, which is an extent. The equality
  $(\star)$ follows directly from the fact that the many valued
  context $\mvc$ is complete. Thus, by~\cref{th55} the proposition holds.
\qed
\end{proof}

Other characterizations (also in Theorem 55 of the FCA book \cite{fca-book}) show
that while every context is fully ordinally measurable, a context is
fully nominally measurable iff it is atomistic. Thus every fully
interordinally measurable context is also fully nominally
measurable\label{th55}. In addition to that, the scale families of
contranominal, dichtomic and interordinal scales are equally
expressive.

While equally expressive, a natural quantity in which these families
and scaling in general differs in how complex the scaling is in terms
of the size the many-valued context $\mvc$. This is expressed by the
\emph{scaling dimension}, to be introduced in the following definition.

\begin{definition}[Scaling Dimension]\label{def:sd}
  Let $\context\coloneqq(G,M,I)$ be a formal context and let
  $\mathcal{S}$ be a family of scales. The \textbf{scaling dimension}
  of $\context$ with respect to $\mathcal{S}$ is the smallest number
  $d$ such that there exists a many-valued context\linebreak
  $\mvc\coloneqq(G,M_{\mathbb{D}},W_{\mathbb{D}},I_{\mathbb{D}})$
  with $|M_{\mathbb{D}}|=k$, such that $\context$ has the same extents
  as the
  context derived from $\mvc$ when only scales from $\mathcal{S}$ are
  used. If no such scaling exists, the dimension remains undefined.
\end{definition}

The so-defined dimension can also be perceived as an instance of the
feature compression problem. Even when it is known that $\context$ can
be derived from a particular data table, it may be that there is
another, much simpler table from which one can also derive $\context$
(cf. \cref{fig:interordinal-dimension} for an example).

To prove properties of the scaling dimension, we need results about
scale measures which were published
elsewhere~\cite{fca-book,smeasure}. The following lemma follows from
Propositions 120 and 122 from \textcite{fca-book}.

\begin{lemma}\label{lem:interordinal-measurement} Let
  $\mvc$ be a complete many-valued context. For every many-valued
  attribute $m$ of $\;\mvc$ let
  $\;\mathbb{S}_{m}:=(G_{m},M_{m},I_{m})$   be a scale for attribute $m$,  
  i.e., with $W(m)\subseteq G_{m}$.
  Furthermore, let $\context$ be the formal context derived from
  $\mvc$ through plain scaling with the scales $(\mathbb{S}_{m}\mid{m\in
    M})$. Recall that $\context$ and $\mvc$ have the same object set
  $G$, and that for every $m\in M$ the mapping  $\sigma_{m}: G\to
  W(m)$,  defined by $g\mapsto m(g)$, is a scale measure from $\context$ to
  $\mathbb{S}_{m}$.

  Then the extents of $\,\context$ are exactly the intersections of
  preimages of extents of the scales $\mathbb{S}_{m}$. 
\end{lemma}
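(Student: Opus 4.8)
The plan is to establish the two inclusions separately, the main tool being the standard fact that in any formal context the extents are precisely the intersections of attribute extents (with the empty intersection yielding the full object set $G$).

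First I would compute the attribute extents of $\context$. Writing $\context=(G,N,J)$ as in the definition of plain scaling, a derived attribute has the form $(m,n)$ with $m\in M$ and $n\in M_{m}$, and the definition of $J$ together with completeness of $\mvc$ (so that $\sigma_{m}(g)=m(g)$ is always defined, hence $\sigma_{m}$ is a total map $G\to W(m)$) gives
\[
  (m,n)^{J}\;=\;\{g\in G\mid \sigma_{m}(g)\mathrel{I_{m}} n\}\;=\;\sigma_{m}^{-1}\!\left(n^{I_{m}}\right).
\]
Thus every attribute extent of $\context$ is the $\sigma_{m}$-preimage of an attribute extent of the scale $\mathbb{S}_{m}$. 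Since every extent $A$ of $\context$ satisfies $A=\bigcap_{(m,n)\in B}(m,n)^{J}$ for some $B\subseteq N$, grouping $B$ by its first coordinate via $B_{m}:=\{n\in M_{m}\mid (m,n)\in B\}$ and using that preimages commute with intersections yields
\[
  A\;=\;\bigcap_{m\in M}\ \bigcap_{n\in B_{m}}\sigma_{m}^{-1}\!\left(n^{I_{m}}\right)\;=\;\bigcap_{m\in M}\sigma_{m}^{-1}\!\left(B_{m}^{\,I_{m}}\right),
\]
where each $B_{m}^{\,I_{m}}$ (equal to $G_{m}$ when $B_{m}=\emptyset$) is an extent of $\mathbb{S}_{m}$. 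Hence every extent of $\context$ is an intersection of preimages of scale extents.

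For the converse I would first note that each $\sigma_{m}$ is an $\mathbb{S}_{m}$-measure of $\context$: an arbitrary extent of $\mathbb{S}_{m}$ is of the form $B_{m}^{\,I_{m}}$ for some $B_{m}\subseteq M_{m}$, and by the displayed computation $\sigma_{m}^{-1}(B_{m}^{\,I_{m}})=\bigcap_{n\in B_{m}}(m,n)^{J}$ is an intersection of attribute extents of $\context$, hence an extent of $\context$ (yielding $G$ when $B_{m}=\emptyset$). Consequently, if an extent $E_{m}\in\Ext(\mathbb{S}_{m})$ is chosen for every $m\in M$, then every $\sigma_{m}^{-1}(E_{m})$ is an extent of $\context$, and since the extents of a formal context are closed under arbitrary intersection, so is $\bigcap_{m\in M}\sigma_{m}^{-1}(E_{m})$. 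Combining the two parts proves the equality; alternatively, both directions can be read off from Propositions~120 and~122 of~\textcite{fca-book}.

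I do not anticipate a genuine obstacle here; the only delicate points are the index bookkeeping for the disjoint union $N=\bigcup_{m\in M}\{m\}\times M_{m}$ when regrouping an intersection indexed by $B\subseteq N$, the explicit use of completeness of $\mvc$ to ensure each $\sigma_{m}$ is total so that its preimages behave well, and the careful handling of the degenerate empty-intersection cases so that $G$ itself appears on both sides of the claimed identity.
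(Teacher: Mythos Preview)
Your argument is correct. The two inclusions are handled cleanly: the forward direction by identifying each attribute extent $(m,n)^{J}$ of the derived context with $\sigma_{m}^{-1}(n^{I_{m}})$ and regrouping, and the backward direction by the scale-measure property of each $\sigma_{m}$ together with closure of $\Ext(\context)$ under intersection. The use of completeness to make $\sigma_{m}$ total and the treatment of the empty-intersection cases are the right fine points to flag.

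Compared with the paper, your route is more explicit. The paper does not spell out a proof of this lemma at all; it simply states that the lemma follows from Propositions~120 and~122 of~\textcite{fca-book}, which is exactly the alternative you mention at the end. So your direct computation is not the paper's approach but an elementary unpacking of what those cited propositions deliver; it has the advantage of being self-contained and making transparent why completeness of $\mvc$ is needed, at the cost of a little bookkeeping that the citation hides.
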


A first result on the scaling dimension is easily obtained for the
case of ordinal scaling. It was already mentioned that every formal
context is fully ordinally measurable, which means that every context
is (up to isomorphism) derivable from a many-valued context $\mvc$ through
plain ordinal scaling. But how large must this $\mvc$ be, how many
many-valued attributes are needed? The next proposition gives the
answer. For simplicity, we restrict to the finite case.
\begin{proposition}\label{prop:ordinal-dimension}
The ordinal scaling dimension of a finite formal context $\context$ equals
the width of the ordered set of infimum-irreducible concepts.  
\end{proposition}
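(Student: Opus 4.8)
The plan is to translate ordinal plain scaling into a statement purely about chains in the extent lattice, and then read off the answer via Dilworth's theorem. The key observation is that the extent lattice of an ordinal scale is a chain (its extents are the down‑sets of a chain). Hence, by \cref{lem:interordinal-measurement}, if $\context$ is derived -- with the same extents -- from a many‑valued context $\mvc$ with $d$ linearly ordered attributes via ordinal plain scaling, then for each attribute $m$ the attribute extents contributed by $m$, together with $G$, form a chain $C_{m}$ in $(\Ext(\context),\subseteq)$, and $\Ext(\context)$ is precisely the set of all intersections $\bigcap_{m} A_{m}$ obtained by picking one $A_{m}\in C_{m}$ for each $m$. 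Conversely, any family of $d$ chains in $\Ext(\context)$ that each contain $G$ can be realised by $d$ linearly ordered many‑valued attributes in this way. So the ordinal scaling dimension of $\context$ is exactly the least number of chains of $\Ext(\context)$ (each containing $G$) whose ``one‑per‑chain'' intersections exhaust $\Ext(\context)$. I claim this number equals the width $w$ of the sub‑poset $\mathcal{M}$ of meet‑irreducible extents, which is order‑isomorphic to the ordered set of infimum‑irreducible concepts, so the widths agree.

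For the upper bound $\le w$: by Dilworth's theorem, partition $\mathcal{M}$ into $w$ chains $D_{1},\dots,D_{w}$ and put $C_{i}:=D_{i}\cup\{G\}$. For an arbitrary extent $E$, use the standard fact that $E=\bigcap\{N\in\mathcal{M}\mid N\supseteq E\}$ in a finite lattice (the empty intersection being $G$, which handles $E=G$). For each $i$, let $a_{i}$ be the least element of $\{N\in D_{i}\mid N\supseteq E\}$ if that set is nonempty and $a_{i}:=G$ otherwise; since each such set is a chain, its intersection is its least element, so $\bigcap_{i}a_{i}=\bigcap\{N\in\mathcal{M}\mid N\supseteq E\}=E$. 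Thus every extent is a one‑per‑chain intersection of the $C_{i}$, and conversely every such intersection is an intersection of extents, hence an extent. Finally, I realise each chain $C_{i}=(G=F_{0}\supsetneq F_{1}\supsetneq\cdots\supsetneq F_{k_{i}})$ by one linearly ordered attribute $m_{i}$ with value domain $\{0,\dots,k_{i}\}$ and $m_{i}(g):=k_{i}-\max\{j\mid g\in F_{j}\}$: ordinal plain scaling of this column has attribute extents exactly $F_{0},\dots,F_{k_{i}}$. The resulting many‑valued context with $w$ attributes derives a context with the same extents as $\context$, so the ordinal scaling dimension is at most $w$.

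For the lower bound $\ge w$: let $\context$ be derived from a many‑valued context $\mvc$ with $d$ linearly ordered attributes, and keep the chains $C_{m}$ from the first paragraph. Let $N\in\mathcal{M}$ and write $N=\bigcap_{m}A_{m}$ with $A_{m}\in C_{m}$. If $A_{m}\neq N$ for every $m$, then every $A_{m}$ is an extent that properly contains $N$, so $N$ is the meet of extents lying strictly above it, contradicting meet‑irreducibility. Hence $N=A_{m}\in C_{m}$ for some $m$, and therefore $\mathcal{M}\subseteq\bigcup_{m}C_{m}$: the poset $\mathcal{M}$ is covered by $d$ chains, so $w=\operatorname{width}(\mathcal{M})\le d$. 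Combining the two bounds yields equality. Note that this argument never used completeness of $\mvc$ -- even with missing values, each column still contributes only a chain of attribute extents -- so the restriction to complete many‑valued contexts in \cref{lem:interordinal-measurement} causes no loss here.

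The routine parts are: that an ordinal scale has a chain of extents; that $E=\bigcap\{N\in\mathcal{M}\mid N\supseteq E\}$ holds in any finite lattice; and the bookkeeping for the explicit scale realising a prescribed chain, including the degenerate case $F_{k_{i}}=\emptyset$ (handled by keeping the value $k_{i}$ in the domain even though it is unattained, its attribute extent then being $\emptyset$). The one genuinely delicate step -- and the crux of the whole proof -- is the lower‑bound observation that a meet‑irreducible extent cannot arise as a nontrivial intersection and must therefore already occur among the per‑column chains; once that is in place, the equality with the width is just Dilworth's theorem.
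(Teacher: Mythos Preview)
Your proof is correct and follows essentially the same approach as the paper: both directions hinge on Dilworth's theorem for the upper bound (cover the meet-irreducible extents by $w$ chains and realise each chain by one ordinal attribute) and on the observation that a meet-irreducible extent cannot be a proper intersection of scale-preimages for the lower bound. Your write-up is more explicit than the paper's---you spell out the one-per-chain intersection argument and the concrete attribute map $m_i(g)=k_i-\max\{j\mid g\in F_j\}$---but the underlying strategy is identical.
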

\begin{proof} 
The width is equal to the smallest number of chains $\mathcal{C}_{i}$
covering the\linebreak $\subseteq$-ordered set of irreducible attribute extents.
 From these chains we can construct a many-valued context $\mvc$ with
 one many-valued attribute $m_{i}$ per chain $\mathcal{C}_i$. The
 values of $m_{i}$ are the elements of the chain $\mathcal{C}_{i}$,
 and the order of that chain is understood as an ordinal prescaling.  
 The derived context by means of ordinal scaling has exactly the set
 of all intersections of chain  extents as extents (Proposition 120
 \cite{fca-book}), i.e., the   set of all $\bigcap \mathcal{A}$ where
    $\mathcal{A}\subseteq \mathcal{C}_1\times \dotsb\times
    \mathcal{C}_w$. Those are exactly the extents of $\context$. This
    implies that the scaling dimension is less or equal to the width.

 But the converse inequality holds as well.  Suppose $\context$ has
 ordinal scaling dimension $w$. Then by
 Lemmaref{lem:interordinal-measurement} every extent of $\context$ is
 the intersection of preimages of extents of the individual scales.
 For $\cap$-irreducible extents  this means that they must each be a
 preimage of an extent from one of the scales. Incomparable extents
 cannot come from the same (ordinal) scale, and 
 thus the scaling must use at least $w$ many ordinal scales.\qed 
\end{proof}

As a proposition we obtain that the ordinal scaling dimension must be at
least as large as the order dimension:
\begin{proposition}[Ordinal Scaling Dimension and Order
  Dimension]\label{odim-osd} 
  The order dimension of the concept lattice $\BV(\context)$ is a
  lower bound for the ordinal scaling dimension of $\context$.
\end{proposition}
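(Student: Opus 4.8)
The plan is to reduce the statement to a comparison between two combinatorial quantities: the order dimension of $\BV(\context)$ and the width of the ordered set of $\cap$-irreducible extents, the latter being the ordinal scaling dimension by \cref{prop:ordinal-dimension}. So it suffices to show that the order dimension of a finite lattice $L$ is at most the width of its subposet of meet-irreducible elements. I would phrase everything inside $\BV(\context)$, using that meet-irreducible concepts correspond bijectively (and order-isomorphically, with $\subseteq$) to the $\cap$-irreducible extents.

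First I would recall the standard fact that the order dimension of a finite lattice $L$ equals the minimum number of chains needed to cover the set $M(L)$ of meet-irreducibles in such a way that each chain yields a ``linear extension component'' — more precisely, one uses the classical result (due to essentially Dilworth/Baker, as presented in the FCA book) that $\dim L \le$ the number of maximal chains required, and in fact that $\dim L$ is bounded above by the width of $M(L)$. I would make this precise: given a chain partition $M(L) = C_1 \cup \dots \cup C_w$ into $w = \operatorname{width}(M(L))$ chains, each chain $C_i$ determines a quasi-order on $L$ (say $x \preceq_i y$ iff every element of $C_i$ above $y$ is above $x$), which extends to a linear order $L_i$, and the intersection $\bigcap_i L_i$ recovers the order of $L$ because every strict relation $x < y$ in $L$ is witnessed by some meet-irreducible $m$ with $y \le m$ but $x \not\le m$, and that $m$ lies in some chain $C_i$, so $x <_i y$. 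Hence $\dim L \le w$.

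Combining this with \cref{prop:ordinal-dimension}: the ordinal scaling dimension of $\context$ is the width of the $\subseteq$-ordered set of $\cap$-irreducible extents, which is exactly $\operatorname{width}(M(\BV(\context)))$ (meet-irreducibles of the concept lattice), and by the previous paragraph this is an upper bound for $\dim \BV(\context)$. Therefore $\dim \BV(\context)$ is a lower bound for the ordinal scaling dimension, which is the claim.

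The main obstacle will be handling the chain-to-linear-extension construction cleanly, i.e., verifying that each $L_i$ as defined is genuinely a linear extension of $L$ and that the realizer property holds; this is the one spot where care is needed, though it is a known argument (it is essentially the proof that $\dim L \le |M(L)|$ sharpened to width, and it appears in the FCA book's treatment of order dimension via the ``standard context''). I would cite the relevant statement in \textcite{fca-book} rather than reprove it, and spend the bulk of the written proof on the translation between the lattice-theoretic width of $M(\BV(\context))$ and the ordinal scaling dimension via \cref{prop:ordinal-dimension}, keeping the argument short.
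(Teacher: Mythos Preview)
Your approach is correct and reaches the same conclusion, but the route differs from the paper's. You factor through \cref{prop:ordinal-dimension} to identify the ordinal scaling dimension with $\operatorname{width}(M(\BV(\context)))$, and then argue the purely lattice-theoretic inequality $\dim L\le\operatorname{width}(M(L))$ via a realizer: a Dilworth chain cover of $M(L)$ yields total preorders whose linear extensions intersect to the lattice order. The paper instead works on the context side and uses the Ferrers-dimension characterisation of order dimension: from an ordinal scaling with $w$ many-valued attributes one gets a partition of the (irreducible) attributes into $w$ blocks, each with staircase-shaped incidence, and the complementary non-incidences of each block form a Ferrers relation, so $w$ Ferrers relations cover the complement of $I$. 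Structurally both arguments rest on the same chain decomposition of the meet-irreducibles; what differs is which description of order dimension (realizers versus Ferrers relations) is invoked. The paper's argument is shorter and stays inside the context; yours isolates a reusable lattice-theoretic lemma ($\dim L\le\operatorname{width}(M(L))$) at the cost of an extra translation step through \cref{prop:ordinal-dimension}. Your sketch of the realizer construction is fine; the only point to make explicit is that $x\not\preceq_i y$ in the total preorder forces $y\prec_i x$ strictly, so every compatible linear extension has $y<_i x$, which is what makes the intersection recover~$\le$.
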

\begin{proof}
  It is well known that the order dimension of $\BV(\context)$ equals
  the Ferrers dimension of $\context$, which remains the same when
  $\context$ is the standard context. The Ferrers relation is the
  smallest number of staircase-shaped relations to fill the
  complement of the incidence relation of $\context$.

  For a context with ordinal scaling dimension equal to $w$ we can
  conclude that the 
  (irreducible) attributes can be partitioned into $w$ parts, one for
  each chain, such that for each part the incidence is
  staircase-shaped, and so are the non-incidences. Thus we can derive
  $w$ Ferrers relations to fill all non-incidences.\qed 
\end{proof}

A simple example that order dimension and ordinal scaling dimension
are not necessarily the same is provided by any context with the
following incidence:
\begin{center} {\scriptsize
    $\begin{array}{|c|c|c|}\hline \times&&\\\hline&\times&\\
       \hline&&\times\\\hline\end{array}$}
\end{center}
Its Ferrers dimension is two, but there are three pairwise
incomparable irreducible attributes, which forces its ordinal scaling
dimension to be three.

 A more challenging problem is to determine the interordinal scaling
 dimension of a context $\context$. We investigate this with the help
 of the following definition.

% For specific scalings such as ordinal or interordinal scaling we
% define the ordinal scaling dimension (OSD) and interordinal scaling
% dimension (ISD) accordingly. An application based on the scaling
% dimension can be found in feature compression, where the many-valued
% context $\mathbb{D}$ of $\context$ is present, but the SD of
% $\context$ is much lower then the number of attributes in $\mathbb{D}$
% (cf. \cref{fig:interordinal-dimension}). Then we are able to provide a
% much smaller many-valued context $\hat{\mathbb{D}}$ that has an
% equivalent scaling, i.e., induces the same closure system on the set
% of objects.

% Determining the scaling dimensions is a combinatorial problem for
% which new algorithms need to be developed. The scope for the rest of
% this section is to develop structural characterizations that promote
% the development of such algorithms and bounds for the ISD and OSD.
%
\begin{definition}
An \textbf{extent ladder} of $\context$ is a set $\mathcal{R}\subseteq
\Ext(\context)$ of nonempty extents that satisfies:
\begin{enumerate}[i)]
\item the ordered set $(\mathcal{R},\subseteq)$ has width $\leq 2$,
  i.e., $\mathcal{R}$ does not contain three mutually incomparable
  extents, and
\item $\mathcal{R}$ is closed under complementation, i.e., when
  $A\in \mathcal{R}$, then also $G\setminus A \in\mathcal{R}$.
\end{enumerate}
\end{definition}
Note that a finite (and nonempty) extent ladder is the disjoint union
of two chains of equal cardinality, for the following reason: 
Consider a minimal extent $E$ in the ladder. Any other extent must
either contain $E$ or be contained in the complement of $E$, because
otherwise there would be three incomparable extents. The extents
containing $E$ must form a chain, and so do their complements,
which are all contained in the complement of $E$.

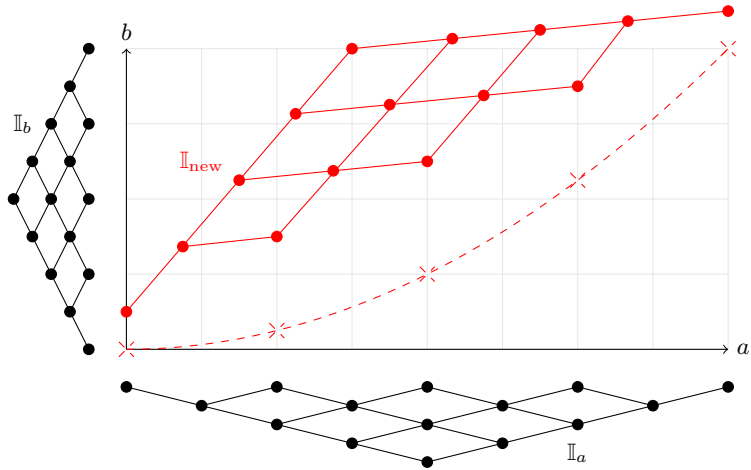
\begin{figure}
  \centering
  \begin{tikzpicture}[domain=0:8]
    \draw[very thin,color=gray,opacity=0.2] (0,0) grid (8,4);
    
    \draw[->] (0,0) -- (8,0) node[right] {$a$};
    \draw[->] (0,0) -- (0,4) node[above] {$b$};
    
    % \draw[color=red] plot[mark=x,mark indices={1,7,...,35},mark size=4pt] ({0.5*\x},{sqrt(0.25* \x)}) node[right] {};
    \draw[color=red,dashed,smooth] plot[mark=x,smooth,mark indices={1,7,...,35},mark size=4pt,thick] ({\x},{0.25* \x * 0.25* \x}) node[right] {};    
    %%%%%% interordinal one
    \draw[] (-0.5, 0) -- (-1.5,2);
    \draw[] (-1.5, 2) -- (-0.5,4);

    \draw[] (-0.5, 1) -- (-0.75,0.5);
    \draw[] (-0.5, 2) -- (-1,1);
    \draw[] (-0.5, 3) -- (-1.25,1.5);

    \draw[] (-0.5, 1) -- (-1.25,2.5);
    \draw[] (-0.5, 2) -- (-1,3);
    \draw[] (-0.5, 3) -- (-0.75,3.5);
    
    \node[circle,draw,fill=black,minimum size = 4pt,inner sep=0pt] () at (-0.5,0){};
    \node[circle,draw,fill=black,minimum size = 4pt,inner sep=0pt] () at (-0.5,1){};
    \node[circle,draw,fill=black,minimum size = 4pt,inner sep=0pt] () at (-0.5,2){};
    \node[circle,draw,fill=black,minimum size = 4pt,inner sep=0pt] () at (-0.5,3){};
    \node[circle,draw,fill=black,minimum size = 4pt,inner sep=0pt] () at (-0.5,4){};

    \node[circle,draw,fill=black,minimum size = 4pt,inner sep=0pt] () at (-0.75,0.5){};
    \node[circle,draw,fill=black,minimum size = 4pt,inner sep=0pt] () at (-0.75,1.5){};
    \node[circle,draw,fill=black,minimum size = 4pt,inner sep=0pt] () at (-0.75,2.5){};
    \node[circle,draw,fill=black,minimum size = 4pt,inner sep=0pt] () at (-0.75,3.5){};

    \node[circle,draw,fill=black,minimum size = 4pt,inner sep=0pt] () at (-1,1){};
    \node[circle,draw,fill=black,minimum size = 4pt,inner sep=0pt] () at (-1,2){};
    \node[circle,draw,fill=black,minimum size = 4pt,inner sep=0pt] () at (-1,3){};

    \node[circle,draw,fill=black,minimum size = 4pt,inner sep=0pt] () at (-1.25,1.5){};
    \node[circle,draw,fill=black,minimum size = 4pt,inner sep=0pt] () at (-1.25,2.5){};

    \node[circle,draw,fill=black,minimum size = 4pt,inner sep=0pt] () at (-1.5,2){};
    \node[text=black] at (-1.375,3) {$\mathbb{I}_{b}$};
    %%%%%% interordinal two
    \draw[] (0, -0.5) -- (4,-1.5);    
    \draw[] (4, -1.5) -- (8,-0.5);    
    
    \draw[] (2, -0.5) -- (5,-1.25);    
    \draw[] (4, -0.5) -- (6,-1);    
    \draw[] (6, -0.5) -- (7,-0.75);    

    \draw[] (2, -0.5) -- (1,-0.75);    
    \draw[] (4, -0.5) -- (2,-1);    
    \draw[] (6, -0.5) -- (3,-1.25);    

    \node[circle,draw,fill=black,minimum size = 4pt,inner sep=0pt] () at (0,-0.5){};
    \node[circle,draw,fill=black,minimum size = 4pt,inner sep=0pt] () at (2,-0.5){};
    \node[circle,draw,fill=black,minimum size = 4pt,inner sep=0pt] () at (4,-0.5){};
    \node[circle,draw,fill=black,minimum size = 4pt,inner sep=0pt] () at (6,-0.5){};
    \node[circle,draw,fill=black,minimum size = 4pt,inner sep=0pt] () at (8,-0.5){};

    \node[circle,draw,fill=black,minimum size = 4pt,inner sep=0pt] () at (1,-0.75){};
    \node[circle,draw,fill=black,minimum size = 4pt,inner sep=0pt] () at (3,-0.75){};
    \node[circle,draw,fill=black,minimum size = 4pt,inner sep=0pt] () at (5,-0.75){};
    \node[circle,draw,fill=black,minimum size = 4pt,inner sep=0pt] () at (7,-0.75){};

    \node[circle,draw,fill=black,minimum size = 4pt,inner sep=0pt] () at (2,-1){};
    \node[circle,draw,fill=black,minimum size = 4pt,inner sep=0pt] () at (4,-1){};
    \node[circle,draw,fill=black,minimum size = 4pt,inner sep=0pt] () at (6,-1){};

    \node[circle,draw,fill=black,minimum size = 4pt,inner sep=0pt] () at (3,-1.25){};
    \node[circle,draw,fill=black,minimum size = 4pt,inner sep=0pt] () at (5,-1.25){};

    \node[circle,draw,fill=black,minimum size = 4pt,inner sep=0pt] () at (4,-1.5){};
    \node[text=black] at (6,-1.375) {$\mathbb{I}_{a}$};

    %%%%%% interordinal dimension

    \draw[draw=red] (0, 0.5) -- (3,4);    
    \draw[draw=red] (3, 4) -- (8,4.5);    

    \draw[draw=red] (2, 1.5) -- (4.33333,4.13333);  
    \draw[draw=red] (4, 2.5) -- (5.5,4.25);  
    \draw[draw=red] (6, 3.5) -- (6.666666,4.366666);  

    \draw[draw=red] (2, 1.5) -- (0.75,1.36666);  
    \draw[draw=red] (4, 2.5) -- (1.5,2.25);  
    \draw[draw=red] (6, 3.5) -- (2.25,3.13333);

    \node[circle,draw=red,fill=red,minimum size = 4pt,inner sep=0pt] () at (0,0.5){};
    \node[circle,draw=red,fill=red,minimum size = 4pt,inner sep=0pt] () at (2,1.5){};
    \node[circle,draw=red,fill=red,minimum size = 4pt,inner sep=0pt] () at (4,2.5){};
    \node[circle,draw=red,fill=red,minimum size = 4pt,inner sep=0pt] () at (6,3.5){};%
    \node[circle,draw=red,fill=red,minimum size = 4pt,inner sep=0pt] () at (8,4.5){};

    \node[circle,draw=red,fill=red,minimum size = 4pt,inner sep=0pt] () at (3,4){};
    \node[circle,draw=red,fill=red,minimum size = 4pt,inner sep=0pt] () at (4.33333,4.1333){};
    \node[circle,draw=red,fill=red,minimum size = 4pt,inner sep=0pt] () at (5.5,4.25){};
    \node[circle,draw=red,fill=red,minimum size = 4pt,inner sep=0pt] () at (6.666666,4.36666){};

    \node[circle,draw=red,fill=red,minimum size = 4pt,inner sep=0pt] () at (0.75,1.36666){};
    \node[circle,draw=red,fill=red,minimum size = 4pt,inner sep=0pt] () at (1.5,2.25){};
    \node[circle,draw=red,fill=red,minimum size = 4pt,inner sep=0pt] () at (2.25,3.133333){};%

    \node[circle,draw=red,fill=red,minimum size = 4pt,inner sep=0pt] () at (2.75,2.375){};

    \node[circle,draw=red,fill=red,minimum size = 4pt,inner sep=0pt] () at (3.5,3.2555533333333333){};
    \node[circle,draw=red,fill=red,minimum size = 4pt,inner sep=0pt] () at (4.75,3.3777766666666666){};

    \node[text=red] at (1,2.5) {$\mathbb{I}_{\text{new}}$};
  \end{tikzpicture}
  \caption{This figure displays the $a$ and $b$ feature of five data
    points and their respective interordinal scales $\mathbb{I}_{a}$
    and $\mathbb{I}_{b}$ (black). The interordinal scaling dimension
    of this data set is one and the respective reduced interodinal
    scale $\mathbb{I}_{\text{new}}$ is depicted in red. The reduction
    would then remove the $a$ and $b$ data column and substitute it
    for a new column given by $\mathbb{I}_{\text{new}}$.}
  \label{fig:interordinal-dimension}
\end{figure}

\begin{theorem}[Interordinal Scaling Dimension]\label{isd-bound}
  The interordinal scaling dimension of a finite formal context
  $\;\context$, if it exists, is equal to the smallest number of extent
  ladders, the union of which contains all meet-irreducible 
  extents of $\context$.
\end{theorem}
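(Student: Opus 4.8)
The plan is to prove the two inequalities $d\le r$ and $r\le d$ separately, where $d$ is the interordinal scaling dimension of $\context$ and $r$ is the smallest number of extent ladders whose union contains all meet-irreducible extents of $\context$, using \cref{lem:interordinal-measurement} as the bridge between a derived context and the extents of the scales that produced it. The recurring observation is that the extents of an interordinal scale on a linearly ordered value set $V$ are exactly the intervals of $V$ (including $\emptyset$ and $V$). Hence, for an attribute $m$ that is interordinally scaled through a scale measure $\sigma_m\colon G\to W(m)$, the preimages of scale extents are exactly the sets $\sigma_m^{-1}(\{v\le t\})\cap\sigma_m^{-1}(\{v\ge s\})$; collecting the nontrivial preimages $\sigma_m^{-1}(\{v\le t\})$ (a chain under $\subseteq$) and $\sigma_m^{-1}(\{v\ge t\})$ (a chain under $\subseteq$) into one set $\mathcal{R}_m$, and noting that the complement of a down-preimage is an up-preimage (or $\emptyset$) and vice versa, yields a set of width at most $2$ that is closed under complementation, i.e.\ an extent ladder of the derived context and hence of $\context$.

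For $r\le d$, assume $\context$ has the same extents as the context derived via interordinal scaling from a many-valued context $\mvc$ with $d$ linearly ordered attributes. By \cref{lem:interordinal-measurement} every extent of $\context$ is an intersection of preimages of scale extents, hence an intersection of down- and up-preimages of those $d$ scales. If $E\in\Ext(\context)$ is meet-irreducible, then writing it as such an intersection and applying meet-irreducibility repeatedly forces $E$ to coincide with a single down- or up-preimage of one scale $\mathbb{S}_m$; since $E$ is neither $G$ nor, by the convention of \cref{def:interordinal} to omit trivial scale attributes, the empty extent, this gives $E\in\mathcal{R}_m$. Thus the $d$ pullback ladders $\mathcal{R}_m$, one for each attribute $m$ of $\mvc$, together contain all meet-irreducible extents of $\context$, so $r\le d$.

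For $d\le r$, start from extent ladders $\mathcal{R}_1,\dots,\mathcal{R}_r$ whose union contains every meet-irreducible extent of $\context$. By the structural fact noted after the definition of extent ladder (a finite nonempty extent ladder is the disjoint union of two chains of equal cardinality, one the set of complements of the other), write each $\mathcal{R}_i$ as a chain $A^i_1\subsetneq\dots\subsetneq A^i_{k_i}$ of nonempty proper extents together with the chain of complements $G\setminus A^i_j$, and define a many-valued attribute $m_i$ on $G$ with linearly ordered value domain $\{0,\dots,k_i\}$ by $m_i(g):=|\{j : g\notin A^i_j\}|$. Because the $A^i_j$ are nested, one checks $\sigma_{m_i}^{-1}(\{v\le t\})=A^i_{t+1}$ (with $A^i_{k_i+1}:=G$) and $\sigma_{m_i}^{-1}(\{v\ge t\})=G\setminus A^i_t$ (with $A^i_0:=\emptyset$), so that, pulled back via $\sigma_{m_i}$, the attribute extents of the interordinal scale of $m_i$ are precisely $\mathcal{R}_i\cup\{G\}$. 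Let $\mvc$ be the many-valued context on $G$ with these $r$ attributes. By \cref{lem:interordinal-measurement}, the extents of the context derived from $\mvc$ are exactly the intersections of members of $\bigcup_i\mathcal{R}_i$ (the empty intersection being $G$); each such set lies in $\Ext(\context)$ because $\bigcup_i\mathcal{R}_i\subseteq\Ext(\context)$ and extents are closed under intersection, while conversely every extent of $\context$ is the intersection of the meet-irreducible extents above it, all of which lie in $\bigcup_i\mathcal{R}_i$. Hence the derived context has the same extents as $\context$, so $d\le r$, and combining the two inequalities gives $d=r$. (The dimension exists precisely when such a ladder cover exists; the omit convention of \cref{def:interordinal} is what excludes the pathological case in which $\emptyset$ is a meet-irreducible extent, which could not be placed into any ladder.)

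The step I expect to be the main obstacle is the construction for $d\le r$: matching the ``two chains of equal cardinality'' shape of a finite extent ladder with the interval structure of an interordinal scale, and in particular verifying that the value assignment $m_i(g)=|\{j : g\notin A^i_j\}|$ produces exactly the ladder $\mathcal{R}_i$ as the family of nontrivial scale attribute extents, with the trivial extents $G$ and $\emptyset$ handled consistently. A subtler but less laborious point is the meet-irreducibility argument for $r\le d$, where one must rule out that a meet-irreducible extent arises only as a genuine multi-fold intersection and argue that it is never empty.
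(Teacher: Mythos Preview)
Your proof is correct and follows essentially the same two-inequality strategy as the paper: pulling back interordinal scale extents along the $\sigma_m$ to obtain $d$ ladders covering the meet-irreducibles, and conversely turning each ladder into a linearly ordered many-valued attribute whose interordinal scale reproduces that ladder. Your explicit value assignment $m_i(g)=|\{j:g\notin A^i_j\}|$ is a concrete numerical encoding of what the paper describes abstractly as taking the ``minimal non-empty intersections of ladder extents'' as attribute values; the two constructions coincide, and your version has the merit of making the verification of the preimage identities $\sigma_{m_i}^{-1}(\{\le t\})=A^i_{t+1}$ and $\sigma_{m_i}^{-1}(\{\ge t\})=G\setminus A^i_t$ completely transparent.
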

\begin{proof}%[\cref{isd-bound}]
Let $\context$  be a formal context with interordinal scaling
dimension $d$. W.l.o.g.\ we may assume that $\context$ was derived by
plain interordinal scaling from a many-valued context $\mvc$ with $d$
many-valued attributes. We have to show that the irreducible attribute
extents of $\context$ can be covered by $d$ extent ladders, but not by
fewer.

To show that $d$ extent ladders suffice, note that the extents of an
interordinal scale form a ladder, and so do their preimages under a
scale measure. Thus Lemma~\ref{lem:interordinal-measurement} provides
an extent ladder for each of the $d$ scales, and every extent is an
intersection of those. Meet-irreducible extents cannot be obtained
from a proper intersection and therefore must all be contained in one of
these ladders.

For the converse assume that $\context$ contains $l$ ladders covering
all meet-irreducible extents. From each such ladder $\mathcal{R}_{i}$
we define a formal context $\context[R]_{i}$, the attribute extents of
which are precisely the extents of that ladder, and note that this
context is an interordinal scale (up to clarification). Define a
many-valued context with $l$ many-valued attributes $m_{i}$. The
attribute values of $m_{i}$ are the minimal non-empty intersections of
ladder extents, and the incidence is declared by the rule that
%\begin{center}
  an object $g$ has the value $V$ for the attribute $m_{i}$ if $g\in V$.
%\end{center}
The formal context derived from this many-valued context by plain
interordinal scaling with the scales $\context[R]_{i}$ has the same
meet-irreducible extents as $\context$, and therefore the same
interordinal scaling dimension. Thus $l\ge d$.
\qed\end{proof}

\begin{proposition}%[Scaling Dimension Bounds]
  Let $w$ denote the width of the ordered set of meet-irreducible
  extents of the formal context $\context$. The interordinal scaling
  dimension of $\context$, if defined, is bounded below by
  $\nicefrac{w}{2}$ and bounded above by $w$.
\end{proposition}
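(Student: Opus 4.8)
The plan is to read off both bounds directly from \cref{isd-bound}, which identifies the interordinal scaling dimension with the least number of extent ladders whose union covers all meet-irreducible extents, and to combine this with Dilworth's theorem on chain covers. Throughout I would write $P$ for the ordered set of meet-irreducible extents of $\context$, so that $P$ has width $w$. Two small preliminary observations I would record first: in a finite context every meet-irreducible extent is an attribute extent (it equals $\{n\}'$ for a suitable, necessarily irreducible, attribute $n$) and, being different from $G$, is a proper extent; and since the scaling dimension is assumed to be defined, \cref{isd-bound} forces every meet-irreducible extent to lie in some extent ladder, hence to be nonempty.

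For the lower bound $\isd(\context)\ge\nicefrac{w}{2}$, I would take extent ladders $\mathcal{R}_{1},\dots,\mathcal{R}_{\ell}$ whose union contains $P$ with $\ell=\isd(\context)$ (these exist by \cref{isd-bound}). Each $\mathcal{R}_{i}$ has width $\le 2$, hence so does the subposet $\mathcal{R}_{i}\cap P$ of $P$; by Dilworth's theorem each $\mathcal{R}_{i}\cap P$ is the union of at most two chains of $P$. Collecting these chains exhibits $P$ as a union of at most $2\ell$ chains, and since $P$ has width $w$, Dilworth's theorem (in the form that a width-$w$ poset needs $w$ chains to be covered) yields $2\ell\ge w$.

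For the upper bound $\isd(\context)\le w$, I would start from a cover of $P$ by $w$ chains $\mathcal{C}_{1},\dots,\mathcal{C}_{w}$ (Dilworth again) and turn each chain into an extent ladder by adjoining complements: set $\mathcal{R}_{i}:=\mathcal{C}_{i}\cup\{G\setminus A\mid A\in\mathcal{C}_{i}\}$. Here I use that the dimension being defined means $\context$ is derivable from interordinal scaling, so by \cref{th55} the complement of every attribute extent -- in particular of every member of $P$ -- is again an extent; these complements are nonempty because the members of $P$ are proper extents. Thus $\mathcal{R}_{i}$ consists of nonempty extents, is closed under complementation by construction, and is the union of the two chains $\mathcal{C}_{i}$ and $\{G\setminus A\mid A\in\mathcal{C}_{i}\}$ (complementation being order-reversing), hence has width $\le 2$; so $\mathcal{R}_{i}$ is an extent ladder. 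The $w$ ladders $\mathcal{R}_{1},\dots,\mathcal{R}_{w}$ cover $P$, and \cref{isd-bound} gives $\isd(\context)\le w$.

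The routine ingredients are two textbook facts about finite posets -- a width-$\le 2$ poset is a union of two chains, and a width-$w$ poset requires $w$ chains to cover -- together with the trivial remark that a union of two chains has width $\le 2$. The one step specific to the setting, and the place where the hypothesis ``if defined'' is genuinely used, is the appeal to \cref{th55} that complements of meet-irreducible extents are extents; this is exactly what legitimizes the chain-to-ladder passage in the upper bound, and I expect it to be the only point needing care. One should also note the degenerate case $w=0$, where $\BV(\context)$ is a one-element lattice and both bounds coincide with $\isd(\context)=0$.
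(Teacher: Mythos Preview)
Your proof is correct and follows essentially the same route as the paper: the lower bound comes from observing that each ladder splits into two chains, and the upper bound from completing each chain of a minimal chain cover by the complements of its members. You are more explicit than the paper in invoking Dilworth's theorem, \cref{isd-bound}, and \cref{th55} (and in checking nonemptiness and the degenerate case), but the underlying argument is identical.
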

%   For a formal context $\context$, the width $w$ of the ordered set of
%   its meet-irreducible extents and the order dimension
%   $\odim{\context}$ of $\BV(\context)$ we can provide the following
%   bounds that:
%   \begin{alignat}{2}
%     \odim{\context}&\leq&\ \osd(\context)&= w\\
% \shortintertext{and if $\context$ can be derived from interordinal scaling}
%     \frac{w}{2} &\leq & \isd(\context)  &\leq w \\
%     &{} & \osd(\context)&\leq 2\cdot\isd(\context)
%   \end{alignat}
% \end{proposition}
\begin{proof}
An extent ladder consists of two chains, and $w$ is the smallest
number of chains covering the meet-irreducible extents. So at least
$\nicefrac{w}{2}$ ladders are required.

Conversely from any covering of the irreducible extents by $w$ chains
a family of $w$ ladders is obtained by taking each of these chains
together with the complements of its extents.
\qed\end{proof}

%   \begin{itemize}
%   \item[(1)] The first inequality is shown in \cref{odim-osd}. From
%     the width $w$ we can follow that there are there are $w$ chains
%     $\mathcal{C}_i$ covering all meet-irreducible extents and that
%     there is no smaller number of chains with that property. Based on
%     this the equality follows directly from \cref{osd}.
%   \item[(2)] As shown in \cref{isd-bound} the ISD is equal to the
%     number of extent ladders covering all meet-irreducible
%     extents. Let $\mathcal{A}$ be an anti-chain of meet-irreducible
%     extents of $\context$ with width $w$. There need to be at least
%     $\nicefrac{w}{2}$ extent ladders to cover $\mathcal{A}$ since the
%     width of an extent ladder is at most two. From the width $w$ we
%     can follow that there are $w$ chains covering all meet-irreducible
%     extents. 
%   \item[(3)] Let $\mathbb{D}\coloneqq (G,M,W,I)$ be a many-valued
%     context for which $\Ext(\mathbb{I}(\mathbb{D}))=\Ext(\context)$
%     and let $\hat{\mathbb{D}}\coloneqq (G,\hat M,W,\hat I)$ be a
%     many-valued context where each attribute in $\hat M$ is
%     pres-scaled by the dual order. Then is
%     $\Ext(\context)=\Ext(\mathbb{O}(\mathbb{D}|\hat{\mathbb{D}}))$
%     where $\mid$ denoted the apposition for many-valued contexts.
%   \end{itemize}

%   \qed
% \end{proof}

% The last inequality $\osd(\context)\leq 2\cdot\isd(\context)$ results
% from using each two chains of the extent ladders as ordinal
% scales. This results in an upper bound for the ordinal scaling
% dimension and a lower bound for the interordinal scaling dimension.
A context where $\osd(\context)\neq 2\cdot\isd(\context)$ is depicted in
the next section in \cref{fig:drive-concepts}.

Another inequality that can be found in terms of many-valued contexts.
For a many-valued context $\mathbb{D}$ and its ordinal scaled context
$\mathbb{O}(\mathbb{D})$ and interordinal scaled context
$\mathbb{I}(\mathbb{D})$ is the ISD of $\mathbb{I}(\mathbb{D})$ in
general not equal to the OSD of $\mathbb{O}(\mathbb{D})$. Consider for
this the counter example given in \cref{fig:unequal-osd-isd}. The
depicted many-valued context has two ordinally pre-scaled attributes
that form equivalent interordinal scales.

\begin{figure}
  \centering  
  \begin{tabular}{|l||c|c|}
    \hline 
    $\mathbb{D}$&$m_1$&$m_2$\\
    \hline\hline
    $g_1$&1&d\\
    $g_2$&2&c\\
    $g_3$&3&b\\
    $g_4$&4&a\\ \hline
  \end{tabular}
  \caption{Example many-valued context where the attribute values are
    ordinally pre-scaled by $1<2<3<4$ and $a<b<c<d$. The interordinal
    scaling dimension of  $\mathbb{I}(\mathbb{D})$ is one and the
    ordinal scaling dimension of $\mathbb{O}(\mathbb{D})$ is two.}
  \label{fig:unequal-osd-isd}
\end{figure}

%\textcolor{blue}{ Bernhards Arbeitsende}

\section{Small Case Study}
\begin{figure}[b]
  \centering
    \begin{tabular}{|l||c|c|c|c|c|c|}
    \hline 
      &1&2&3&4&5&6\\
      \hline\hline
      Conventional&&&\x&\x&\x&\x\\\hline
      All-Wheel&\x&&&\x&\x&\x\\\hline
      Mid-Wheel&\x&\x&&&\x&\x\\\hline
      Rear-Wheel&\x&\x&\x&&&\x\\\hline
      Front-Wheel&\x&\x&\x&\x&\x&\\
      \hline
    \end{tabular}
  \label{fig:drive-concepts-ctx}
  \caption{The standard context of the drive concepts lattice,
    cf. Figure 1.13 in \textcite{fca-book}.}
\end{figure}

\begin{figure}[ht]
  \centering
  \colorlet{mivertexcolor}{black!80}
\colorlet{jivertexcolor}{black!80}
\colorlet{vertexcolor}{black!80}
\colorlet{bordercolor}{black!80}
\colorlet{leddar1color}{red!80}
\colorlet{leddar2color}{blue!50}
\colorlet{leddar3color}{orange!80}
\colorlet{linecolor}{gray}
% parameter corresponds to the used valuation function and can be addressed by #1
\tikzset{vertexbase/.style 2 args={semithick, shape=circle, inner sep=2pt, outer sep=0pt, draw=bordercolor},%
  vertex/.style 2 args={vertexbase={#1}{}, fill=vertexcolor!45},%
  leddar1/.style 2 args={vertexbase={#1}{}, fill=leddar1color},%
  leddar2/.style 2 args={vertexbase={#1}{}, fill=leddar2color},%
  leddar3/.style 2 args={vertexbase={#1}{}, fill=leddar3color},%
  % mivertex/.style 2 args={vertexbase={#1}{}, fill=mivertexcolor!45},%
  % jivertex/.style 2 args={vertexbase={#1}{}, fill=jivertexcolor!45},%
  % divertex/.style 2 args={vertexbase={#1}{}, top color=mivertexcolor!45, bottom color=jivertexcolor!45},%
  conn/.style={-, thick, color=linecolor}%
}
\tikzstyle{o} = [text width=0.8cm,font=\small\linespread{-1}\selectfont]
\tikzstyle{c} = [text width=2cm,align=center]
\tikzstyle{r} = [text width=2.6cm,align=right]
\tikzstyle{l} = [text width=2cm,align=left]
\begin{tikzpicture}[scale=0.8,font=\footnotesize]
  \begin{scope} %for scaling and the like
    \begin{scope} %draw vertices
      \foreach \nodename/\nodetype/\param/\xpos/\ypos in {%
        0/vertex//6.5040822140822065/-1.6418844118844103,
        1/leddar1//2.1452258852258783/-0.10192918192918121,
        2/vertex//4.846672771672765/-0.10192918192918121,
        3/vertex//8.148441188441181/-0.036676841676840155,
        4/leddar2//6.491031746031739/0.028575498575499125,
        5/leddar1//11.619865689865682/0.10687830687830768,
        6/leddar3//6.595435490435483/1.7903886853886863,
        7/vertex//8.070138380138372/1.8425905575905581,
        8/vertex//4.846672771672765/1.8556410256410256,
        9/vertex//2.0538726088726023/1.9469943019943035,
        10/leddar1//-0.4387667887667952/1.960044770044771,
        11/leddar1//13.929798534798527/2.116650386650388,
        12/vertex//11.75037037037036/2.16885225885226,
        13/vertex//6.595435490435483/3.4608485958485957,
        14/leddar1//14.03420227920227/4.022018722018723,
        15/leddar1//-0.582321937321943/4.1786243386243385,
        16/vertex//6.621536426536419/4.752844932844933,
        17/vertex//2.6411436711436647/5.039955229955231,
        18/vertex//10.471424501424492/5.222661782661783,
        19/leddar1//2.4192857142857083/7.010575905575906,
        20/leddar3//7.28711029711029/7.1149796499796505,
        21/leddar1//10.49752543752543/7.1149796499796505,
        22/leddar2//6.295274725274718/8.511379731379732,
        23/vertex//6.569334554334548/10.442849002849002
      } \node[\nodetype={\param}{}] (\nodename) at (\xpos, \ypos) {};
    \end{scope}
    \begin{scope} %draw connections
      \path (15) edge[conn,draw=leddar1color] (19);
      \path (6) edge[conn] (18);
      \path (5) edge[conn] (16);
      \path (17) edge[conn] (22);
      \path (4) edge[conn] (8);
      \path (19) edge[conn] (23);
      \path (18) edge[conn] (21);
      \path (14) edge[conn,draw=leddar1color] (21);
      \path (0) edge[conn] (1);
      \path (11) edge[conn] (18);
      \path (0) edge[conn] (3);
      \path (7) edge[conn] (13);
      \path (16) edge[conn] (22);
      \path (12) edge[conn] (14);
      \path (4) edge[conn] (7);
      \path (1) edge[conn] (16);
      \path (9) edge[conn] (20);
      \path (2) edge[conn] (10);
      \path (2) edge[conn] (8);
      \path (9) edge[conn] (15);
      \path (5) edge[conn] (12);
      \path (0) edge[conn] (2);
      \path (13) edge[conn] (19);
      \path (12) edge[conn] (20);
      \path (22) edge[conn] (23);
      \path (8) edge[conn] (15);
      \path (20) edge[conn] (23);
      \path (3) edge[conn] (11);
      \path (7) edge[conn] (14);
      \path (6) edge[conn] (13);
      \path (21) edge[conn] (23);
      \path (0) edge[conn] (5);
      \path (16) edge[conn] (20);
      \path (18) edge[conn] (22);
      \path (13) edge[conn] (21);
      \path (10) edge[conn,draw=leddar1color] (15);
      \path (17) edge[conn] (19);
      \path (5) edge[conn,draw=leddar1color] (11);
      \path (0) edge[conn] (4);
      \path (1) edge[conn] (9);
      \path (11) edge[conn,draw=leddar1color] (14);
      \path (1) edge[conn,draw=leddar1color] (10);
      \path (3) edge[conn] (6);
      \path (10) edge[conn] (17);
      \path (6) edge[conn] (17);
      \path (2) edge[conn] (6);
      \path (3) edge[conn] (7);
      \path (8) edge[conn] (13);
      \path (4) edge[conn] (9);
      \path (4) edge[conn] (12);
    \end{scope}
    \begin{scope} %add labels
      \foreach \nodename/\labelpos/\labelopts/\labelcontent in {%
        1/above left/r/{C m, M ++,\ De -},
        2/above left/r/{C h, S u/n},
        3/above right/l/{S n, E --, M --},
        4/above/c/{R ++, Dl -, C vl, E ++},
        5/above right/l/{S o, R --},
        6/above//{M -},
        9/above//{S u},
        11/above//{E -},
        14/above//{C l},
        15/above//{E +},
        18/above//{De ++, Dl ++},
        19/above//{R +},
        20/above//{M +},
        21/above//{De +},
        22/above//{Dl +},
        1/below/o/{Conventional},
        2/below/o/{All-wheel},
        3/below/o/{Mid-engine},
        4/below/o/{Front-wheel},
        5/below/o/{Rear-wheel}
      } \coordinate[label={[\labelopts]\labelpos:{\labelcontent}}](c) at (\nodename);
    \end{scope}
  \end{scope}
\end{tikzpicture}

%%% Local Variables:
%%% mode: latex
%%% End:
  \caption{Concept lattice (cf. Figure 1.14 in \textcite{fca-book}) for the
    context of drive concepts (Figure 1.13 \textcite{fca-book}). The
    extent ladders indicating the three interordinal scales are
    highlighted in color. The ordinal scaling dimension as well as
    order dimension of this context is four.}
  \label{fig:drive-concepts}
\end{figure}

To consolidate the understanding of the notions and statements on the
(interordinal) scaling dimension we provide an explanation based on a
small case example based on the \emph{drive concepts}~\cite{fca-book}
data set. This data set is a many-valued context consisting of five
objects, which characterize different ways of arranging the engine and
drive chain of a car, and seven many-valued attributes that measure
quality aspects for the driver, e.g., \emph{economy of space}. The
data set is accompanied by a scaling is that consists of a mixture of
bi-ordinal scalings of the quality (attribute) features, e.g.,
$\emph{good}<\emph{excellent}$ and $\emph{very poor}<\emph{poor}$, and
a nominal scaling for categorical features, e.g., for the
\emph{steering behavior}. The concept lattice of the scaled context
consists of twenty-four formal concepts and is depicted in
\cref{fig:drive-concepts}.

First we observe that the concept lattice of the example meets the
requirements to be derivable from interordinal scaling
(\cref{th55}). All objects are annotated to the atom concepts and the
complement of every attribute extent is an extent as well. The
interordinal scaling dimension of the scaled \emph{drive concept}
context is three which is much smaller then the original seven
many-valued attributes. Using the extent ladder characterization
provided in \cref{isd-bound} we highlighted three extent ladders in
color in the concept lattice diagram (see \cref{fig:drive-concepts}).
The first and largest extent ladder (highlighted in red) can be
inferred from the outer most concepts and covers sixteen out of
twenty-four concepts. The remaining two extent ladders have only two
elements and are of dichotomic scale.

\section{Discussion and Future Work}
% \subsection*{Link to Data Science } %% Besseren Titel Finden :)
The presented results on the scaling dimension have a number of
interfaces and correspondences to classical data science methods.  A
natural link to investigate would be comparing the scaling dimension
with standard correlation measures. Two features that correlate
prefectly, e.g., \cref{fig:interordinal-dimension}, induce an
equivalent conceptual scaling on the data. An analog of the scaling
dimension in this setting would be the smallest number of independent
features. Or, less strict, the smallest number of features such that
these features do not correlate more than some parameter. This obvious
similarity of both methods is breached by a key advantage of our
approach. In contrast to correlation measures, our method relies
solely on ordinal properties~\cite{measurement-levels} and does not
require the introduction of measurements for distance or ratios.

\cref{{odim-osd}} has already shown that there is a relationship between
an aspect of the scaling dimension of a formal context and the order
dimension of its concept lattice. The assumption that further such
relationships may exist is therefore reasonable. Yet, a thorough
investigation of these relationships is an extensive research program
in its own right and therefore cannot be addressed within the scope of
this paper. An investigation on how the scaling dimension relates to
other measures of dimension within the realm of
FCA~\cite{IntrinsicDimension,Tatti2006WhatIT} is therefore deemed
future work.

%\subsection*{Computational Tractability} 
Due to novel insights into the computational tractability of
recognizing scale-measures~\cite{smeasure} (that is in preparation and
will be made public later this year) we have little hope that the
scaling dimension and interordinal scaling dimension can be decided in
polynomial time. Despite that, efficient algorithms for real-world
data that compute the scaling dimension and its specific versions,
i.e., ordinal, interordinal, nominal, etc, may be developed.  In
addition to that, so far it is unknown if an approximation of the
scaling dimension, e.g., with respect to some degree of conceptual
scaling error \cite{smeasure-error} or bounds, is tractable. If
computationally feasible, such an approximation could allow larger
data sets to be handled.

Another line of research that can be pursued in future work is how the
scaling dimension can be utilized to derive more readable line
diagrams. We can envision that diagrams of concept lattices that are
composed of fewer scales, i.e., have a lower scaling dimension, are
more readable even if they have slightly more concepts. An open
problem that needs to be solved here is: for a context $\context$ and
$k\in \mathbb{N}$ the identification of $k$ scales that cover the
largest number of concepts from $\BV(\context)$ with respect to scale
measures.

\section{Conclusion}
With our work, we contributed towards a deeper understanding of
conceptual scaling~\cite{scaling}. In particular, we introduced the
notion of pre-scaling to formalize background knowledge on attribute
domains, e.g., underlying order relations, that can be used and
extended to scales in the process of conceptual scaling.
To deal with the complexity of scalings selection or logical
compression methods have been proposed to reflect parts of the
conceptual structure~\cite{smeasure}. Furthermore, we introduced the
notions of conceptual and contextual views to characterize these
methods and provided a first formal definition.

We extended the realm of conceptual measurability~\cite{cmeasure} by
the scaling dimension, i.e., the least number of attributes needed to
derive a context by the means of plain scaling. This notion does not
only provide insight towards the complexity of an underlying scaling
but can also be applied for many-valued feature compression. For the
identification of the scaling dimension, we provided characterizations
for the ordinal and interordinal scaling dimension in terms of
structural properties of the concept lattice. These employ chains of
meet-irreducible extents and newly introduced extent ladders. We
demonstrated their applicability based on the drive concepts data set
and highlighted the identified extent ladders and chains in the
concept lattice diagram. Our analysis showed that while the
many-valued context consists of seven many-valued attributes an
equivalent scaling can be derived from three interordinally scaled or
four ordinal scaled many-valued attributes. 

In addition to the structural characterizations of the scaling
dimensions, we provided bounds for the interordinal and ordinal
scaling dimension. In detail, we showed upper and lower bounds in terms
of the width and the order dimension of the concept lattice. This
result shows in particular how far-reaching and therefore necessary a
future in-depth investigation of the scaling dimension is.

\printbibliography
\end{document}